%% file: main.tex
\documentclass{article}

\usepackage[nonatbib,preprint]{neurips_2026}

\usepackage[utf8]{inputenc} 
\usepackage[T1]{fontenc}    
\usepackage{amsmath}
\usepackage{amssymb}
\usepackage{mathtools}
\usepackage{amsthm}
\usepackage{enumitem}
\usepackage{algorithm}
\usepackage{algpseudocode}
\usepackage{microtype}
\usepackage{graphicx}
\usepackage{subfigure}
\usepackage{booktabs}
\usepackage[colorlinks,linkcolor={red!80!black},citecolor={blue},allbordercolors={1 1 1},urlcolor={blue!80!black},hypertexnames=false]{hyperref}
\usepackage[capitalize,noabbrev]{cleveref}
\usepackage{setspace}
\usepackage[dvipsnames]{xcolor}
\usepackage[numbers,sort&compress]{natbib}
\usepackage{svg}
\usepackage{wrapfig}

\theoremstyle{plain}
\newtheorem{theorem}{Theorem}[section]
\newtheorem{proposition}[theorem]{Proposition}

\theoremstyle{definition}

\theoremstyle{remark}

\usepackage{pgfplots}
\pgfplotsset{compat=1.18}
\usepackage{booktabs,tabularx,ragged2e,xcolor}

\usepackage{tikz}
\usetikzlibrary{positioning,fit,arrows.meta, calc}

\newcommand{\DPVarTrainPenalty}{\gamma}
\newcommand{\DPVarValPenalty}{\zeta}

\algnewcommand{\IIf}[1]{\State\algorithmicif\ #1\ \algorithmicthen}
\algnewcommand{\EndIIf}{\unskip\ \algorithmicend\ \algorithmicif}

\newlist{assumplist}{enumerate}{1}
\setlist[assumplist]{label=(\textbf{\Alph*})}
\Crefname{assumplisti}{Assumption}{Assumptions}

\newlist{assumplist2}{enumerate}{1}
\setlist[assumplist2]{label=(\textbf{\alph*})}
\Crefname{assumplist2i}{Assumption}{Assumptions}

\definecolor{darkred}{rgb}{0.55, 0.0, 0.0}
\usepackage{xcolor}

\definecolor{ink}{RGB}{10,10,10}
\definecolor{accent}{RGB}{0,83,194}      
\definecolor{accent2}{RGB}{209,129,0}     

\title{Functional Bilevel Optimization for Predictive Fairness}

\author{
  Ieva Petrulionyte, Julien Mairal, Michael Arbel \\
  Univ. Grenoble Alpes, Inria, CNRS, Grenoble INP, LJK, 38000 Grenoble, France \\
  \texttt{firstname.lastname@inria.fr} \\
}

\begin{document}
\maketitle

\begin{abstract}
When sensitive attributes are continuous and high-dimensional -- demographic score vectors, posteriors over attributes, age or income profiles -- enforcing full statistical independence is often too restrictive, and existing relaxations rely on indirect dependence penalties or adversarial schemes that do not directly target the fairness-accuracy trade-off. We instead consider \emph{mean demographic parity} through DPVar, the variance of the conditional-mean prediction given the sensitive attribute, and show that optimizing it yields a \emph{functional bilevel problem}. We propose two algorithms for this problem: FBO, which uses a closed-form adjoint we derive for the squared-loss case to obtain an exact hypergradient, and ITD, which differentiates through unrolled inner steps and extends beyond squared loss. On synthetic data and a new semi-synthetic benchmark built from 60 tabular regression datasets, both methods achieve the lowest or near-lowest aggregate fairness-accuracy regret, and consistently match or outperform strong HSIC, adversarial, linear-dependence, and generalized-DP baselines.
\end{abstract}

\section{Introduction}\label{sec:introduction}
\input{new_intro.tex}

\section{Related Work}\label{sec:related_work}

\vspace*{-0.2cm}

\input{new_related.tex}

\section{From DPVar to a Practical Functional Bilevel Algorithm}\label{sec:method}

\vspace*{-0.2cm}

\input{new_method.tex}

\section{Experiments}\label{sec:experiments}

\vspace*{-0.2cm}

\input{new_experiments.tex}

\section{Concluding Remarks}\label{sec:conclusion}

\input{new_ccl.tex}


\newpage
\bibliography{references}

\newpage
\appendix

\section{Additional Related Work}\label{app:related_work}

\paragraph{Demographic parity and dependence penalties.}
Demographic-parity methods often regularize training by penalizing dependence between the prediction $f_\omega(X)$ and the sensitive attribute $A$. This includes correlation/covariance penalties (denoted $R^2$) \citep{zafar2017fairness}, kernel-based dependence measures such as HSIC \citep{gretton2008hsic}, and mutual-information inspired objectives \citep{belghazi2018mine,kamishima2012prejudice}. These methods are attractive because they are single-level objectives that can be optimized with standard stochastic-gradient tools. Their limitation, in our setting, is that the optimized dependence penalty is not the same object as the demographic-parity criterion we care about. More specifically, the existing methods do not directly target the variance of the conditional mean $a\mapsto \mathbb{E}[f_\omega(X)\mid A=a]$ \citep{mary2019fairness,jiang2022gdp} and act on the joint distribution of $(f_\omega(X), A)$ instead.

\paragraph{Adversarial debiasing.}
Adversarial training enforces invariance by training a predictor jointly with an adversary that attempts to recover $A$ from intermediate representations or from $f_\omega(X)$ \citep{zhang2018mitigating,edwards2016censoring,ganin2016domain}. This approach can target a rich family of dependence notions (depending on the adversary class) and has been successful empirically. At the same time, it can be sensitive to optimization hyperparameters and to the choice of adversary's architecture. Additionally, similarly to dependence penalties, adversarial debiasing does not directly estimate the conditional mean that defines DPVar \citep{jiang2022gdp,mary2019fairness}.


\paragraph{Bilevel fairness methods.}
Our use of bilevel optimization differs from recent bilevel approaches to fair learning \citep{roh2021fairbatch,ozdayi2021bifair,yazdani2024fair,tanji2026fairness}. Earlier bilevel fairness methods use the outer problem to adapt training mechanisms such as minibatch composition or sample weights: FairBatch~\citep{roh2021fairbatch} adjusts group-dependent batch-selection parameters to improve standard group-fairness metrics, while BiFair~\citep{ozdayi2021bifair} uses bilevel optimization for fair learning when demographic labels are available only for a small subset of the data. These methods are bilevel in the training procedure, but their inner problems do not estimate the conditional-mean function defining a fairness criterion. FairBiNN~\citep{yazdani2024fair} takes an architecture-centered view: the model is split into accuracy-responsible and fairness-responsible components, and the bilevel structure comes from optimizing these two sets of layers separately. In this sense, FairBiNN is related to our work as a bilevel fairness method, which is why it appears in \cref{fig:dp-dpvar-literature-map}. However, it differs in two key ways: (i) it targets standard group-fairness losses for discrete sensitive groups rather than the conditional-mean DPVar criterion, and (ii) its accuracy-fairness trade-off is controlled through architectural and optimization choices, such as the placement of fairness layers and the relative learning rates of the two players, rather than by a penalty parameter weighting the DPVar criterion. The bilevel Pareto framework of \citet{tanji2026fairness} is also distinct. Their BADR method is motivated by the observation that standard fairness regularization can produce models that are Pareto-inefficient across predefined groups. In contrast to our approach, the bilevel structure in BADR is not induced by an unknown conditional-mean function. Instead, it is used to select a Pareto-efficient fair model among predefined groups. Thus, existing bilevel fairness methods use bilevel optimization to adapt the training procedure, structure the architecture, or select a point on a group-risk Pareto frontier; in our work, the bilevel structure is induced by the DPVar objective itself, through the estimation of $h^\star_\omega(a)=\mathbb E[f_\omega(X)\mid A=a]$.


\section{Total-Gradient Derivation for the Squared-Loss Bilevel Objective}
\label{app:hypergrad}

\begin{proof}[Proof of \cref{prop:closed_form_hypergrad}]
Let $P_{\text{out}}$ be the law on $(X,Y)$ and $P_{A_{\text{out}}}$ the marginal on $A$. The outer objective can be written as:
\begin{align*}
L_{\text{out}}(\omega,h)
= \underbrace{\mathbb{E}_{(X,Y)\sim P_{\text{out}}}\left[(f_\omega(X)-Y)^2\right]}_{\text{accuracy}} + \alpha\underbrace{\mathbb{E}_{A\sim P_{A_\text{out}}}\left[(h(A)-\mu)^2\right]}_{\text{DP variance penalty}},
\qquad
\mu=\mathbb{E}_{A\sim P_{A_\text{out}}}\left[h(A)\right].
\end{align*}

\paragraph{1) Functional gradient in $h$.} We note $h(A)$, $\varphi(A)$, and $\mathbb{E}_{A\sim P_{A_\text{out}}}$ as $h$, $\varphi$, and $\mathbb{E}$ for readability. We work in $L_2(P_{A_{\text{out}}})$ the space of square integrable functions wrt. the measure $P_{A_\text{out}}$ equipped with the inner product $\langle u,v\rangle_{L_2(P_{A_{\text{out}}})} := \mathbb{E}_{A\sim P_{A_{\text{out}}}}[u(A)v(A)]$. For any direction $\varphi \in L_2(P_{A_{\text{out}}})$, consider the perturbation $h + {\color{red}\varepsilon} \varphi$ and define:
\begin{align*}
    \mu_\varepsilon := \mathbb{E}[h+\varepsilon\varphi]
    = \mathbb{E}\left[h\right] + \varepsilon \mathbb{E}[\varphi]
    = \mu + \varepsilon \mathbb{E} [\varphi].
\end{align*}
Then consider the perturbed DP variance penalty:
\begin{align*}
    \mathbb{E}\left[(h+\varepsilon\varphi-\mu_\varepsilon)^2\right]
    &= \mathbb{E}\left[(h+\varepsilon\varphi-\left( \mu + \varepsilon \mathbb{E} [\varphi] \right))^2\right] \\
    &= \mathbb{E}\left[(h-\mu+\varepsilon\left(\varphi - \mathbb{E}[\varphi]\right))^2\right] \\
    &= \mathbb{E}\left[(h-\mu)^2\right]
    + 2\varepsilon \mathbb{E}\left[(h-\mu)\left(\varphi - \mathbb{E}[\varphi]\right)\right]
    + \varepsilon^2 \mathbb{E}\left[\left(\varphi - \mathbb{E}[\varphi]\right)^2\right].
\end{align*}
Keeping only the first-order term in $\varepsilon$ gives the differential
\begin{align*}
    \partial_h \mathbb{E}\left[(h-\mu)^2\right] [\varphi]
    = 2 \mathbb{E}\left[(h-\mu)\left(\varphi - \mathbb{E}[\varphi]\right)\right]
    = 2 \mathbb{E}\left[(h-\mu)\varphi\right],
\end{align*}
since $\mathbb{E}[h-\mu] = \mathbb{E}[h] - \mu = \mu - \mu = 0$. Assuming the \texttt{IN} and \texttt{OUT} splits are drawn i.i.d. from the same population, we have $P_{A_{\text{in}}} = P_{A_{\text{out}}}$, so for any $\varphi\in L_2(P_{A_{\text{in}}})$,
\begin{align*}
    \partial_h L_{\text{out}}(\omega,h)[\varphi]
    &= 2\alpha\mathbb{E}_{A\sim P_{A_{\text{out}}}}\left[(h(A)-\mu)\varphi(A)\right] \\
    &= 2\alpha\mathbb{E}_{A\sim P_{A_{\text{in}}}}\ \left[(h(A)-\mu)\varphi(A)\right].
\end{align*}

\paragraph{2) Adjoint.}
The inner objective is
$L_{\text{in}}(\omega,h)=\mathbb{E}_{(X,A)\sim P_{\text{in}}}\big[(f_\omega(X)-h(A))^2\big]$,
whose Hessian w.r.t.\ $h$ is $\partial_{hh}^2 L_{\text{in}}=2I$ on $L_2(P_{A_{\text{in}}})$.
The adjoint $a^\star_\omega\in L_2(P_{A_{\text{in}}})$ solves
\begin{align*}
    \partial_{hh}^2 L_{\text{in}}(\omega,h^\star_\omega) a^\star_\omega
    = - \partial_h L_{\text{out}}(\omega,h^\star_\omega),
\end{align*}
hence
\begin{align*}
    2a^\star_\omega(a) &= - 2\alpha (h^\star_\omega(a)-\mu_\omega)\\
    a^\star_\omega(a) &= -\alpha (h^\star_\omega(a)-\mu_\omega),
    \qquad
    \mu_\omega := \mathbb{E}_{A\sim P_{A_{\text{out}}}}[h^\star_\omega(A)].
\end{align*}

\paragraph{3) Total gradient} By the functional adjoint identity,
\begin{align*}
\nabla_\omega F(\omega)
=\partial_\omega L_{\text{out}}(\omega,h^\star_\omega)
+\partial_{\omega h}^2 L_{\text{in}}(\omega,h^\star_\omega)[a^\star_\omega].
\end{align*}
Compute each term
\begin{itemize}
    \item \textbf{Direct outer term:}
      \begin{align*}
          \partial_\omega L_{\text{out}}(\omega,h^\star_\omega)
          =2 \mathbb{E}_{\text{out}}\left[(f_\omega(X)-Y) \partial_\omega f_\omega(X)\right].
      \end{align*}
    \item \textbf{Implicit (adjoint) term:}
      \begin{align*}
          \partial_h L_{\text{in}}(\omega,h)[\varphi]
          = -2 \mathbb{E}_{\text{in}}\left[(f_\omega(X)-h(A)) \varphi(A)\right],
      \end{align*}
      differentiating w.r.t.\ $\omega$:
      \begin{align*}
          \partial_{\omega h}^2 L_{\text{in}}(\omega,h)[a]
          = -2 \mathbb{E}_{\text{in}}\left[a(A) \partial_\omega f_\omega(X)\right],
      \end{align*}
      writing the full implicit term:
      \begin{align*}
          \partial_{\omega h}^2 L_{\text{in}}(\omega,h^\star_\omega)[a^\star_\omega]
          =-2 \mathbb{E}_{\text{in}}\left[a^\star_\omega(A) \partial_\omega f_\omega(X)\right]
          =2\alpha \mathbb{E}_{\text{in}}\left[\big(h^\star_\omega(A)-\mu_\omega\big) \partial_\omega f_\omega(X)\right].
      \end{align*}
\end{itemize}

Putting it together,
\begin{align*}
    \nabla_\omega F(\omega)
    = 2 \mathbb{E}_{X,Y\sim P_{\text{out}}}\left[(f_\omega(X)-Y)\partial_\omega f_\omega(X)\right]
     + 
    2\alpha \mathbb{E}_{A,X\sim P_{\text{in}}}\left[\big(h^\star_\omega(A)-\mu_\omega\big) \partial_\omega f_\omega(X)\right],
    \end{align*}
    \quad
    \begin{align*}
    \mu_\omega=\mathbb{E}_{A\sim P_{A_{\text{out}}}}[h^\star_\omega(A)].
\end{align*}
\end{proof}

\section{Example: Job Recommendation.}\label{app:job_example}
Consider job recommendation in online advertising, where some demographic profiles should not systematically receive lower-salary recommendations on average. This is the setting considered in the FairJob benchmark \citep{vladimirova2024fairjob}. In this benchmark, the protected attribute $A$ is not directly observed; instead, the released dataset provides a binary gender proxy inferred from users' behavioral interactions. This binary proxy makes the dataset compatible with standard demographic-parity methods, which are usually designed for discrete sensitive groups and do not directly apply to high-dimensional continuous sensitive attributes. At the same time, it highlights a limitation of this approach: a richer behavioral signal must be reduced to a hard demographic label before these methods can be used. Our method avoids this reduction when the richer sensitive representation is available. Rather than collapsing the sensitive signal to a binary class and enforcing independence from that class, it can work directly with a richer representation and control whether predicted salaries or recommendation scores are systematically higher or lower on average across demographic profiles. Moreover, because our approach controls the conditional expectation rather than enforcing full independence, it does not require the entire distribution of recommendations to be identical across all demographic profiles. This is important in job recommendation, where different groups may legitimately differ in preferences or constraints, such as desired flexibility, remote-work options, commuting tolerance, or schedule compatibility. Enforcing full independence risks washing out this structure and producing recommendations that are less useful to everyone. By contrast, controlling the conditional expectation means that demographic profiles receive comparable salary levels on average, while still allowing recommendations to reflect heterogeneous needs.


\section{Constructing the Sensitive Attribute in Tabular Data}\label{app:selectA}

For each dataset, we construct the sensitive attribute automatically from the standardized training features.

\begin{enumerate}
\item \textbf{Rank coordinates by target informativeness.}
Let $X_j$ denote the $j$th standardized feature coordinate and let $y$ be the standardized training target. We compute
\[
c_j = |\mathrm{corr}(X_j,y)|
\]
for every coordinate and rank coordinates in decreasing order of $c_j$.

\item \textbf{Form a predictive candidate pool.}
We retain the top 200 ranked coordinates, after discarding coordinates with $c_j < 0.02$. If this threshold leaves the pool empty, we fall back to the top-ranked coordinates without thresholding.

\item \textbf{Prefer coordinates that are also proxy-like.}
For each candidate coordinate $X_j$, we compute a simple proxy score that measures how predictable it is from the remaining features. Concretely, we sample up to 128 other coordinates and define
\[
p_j = \max_{c\neq j} |\mathrm{corr}(X_c,X_j)|^2,
\]
where the maximum is taken over the sampled coordinates. We then rank candidate coordinates by the score
\[
s_j = c_j (1+p_j),
\]
so that selected sensitive coordinates are both predictive of the target and recoverable, at least approximately, from the remaining features.

\item \textbf{Define the sensitive attribute and remove it from the predictor input.}
The sensitive attribute $A$ is formed by taking the top 25\% of this candidate pool according to $s_j$, clipped so that at least one predictive coordinate remains in the predictor input. These selected coordinates are removed from $X$ before training the predictor.
\end{enumerate}

This procedure yields a continuous, potentially high-dimensional sensitive representation that is informative for prediction while preserving nontrivial proxy information in the remaining covariates.

\section{Additional Results}\label{app:addRes}

\begin{figure*}[hbtp!]
  \centering
  \includegraphics[width=1.0\linewidth]{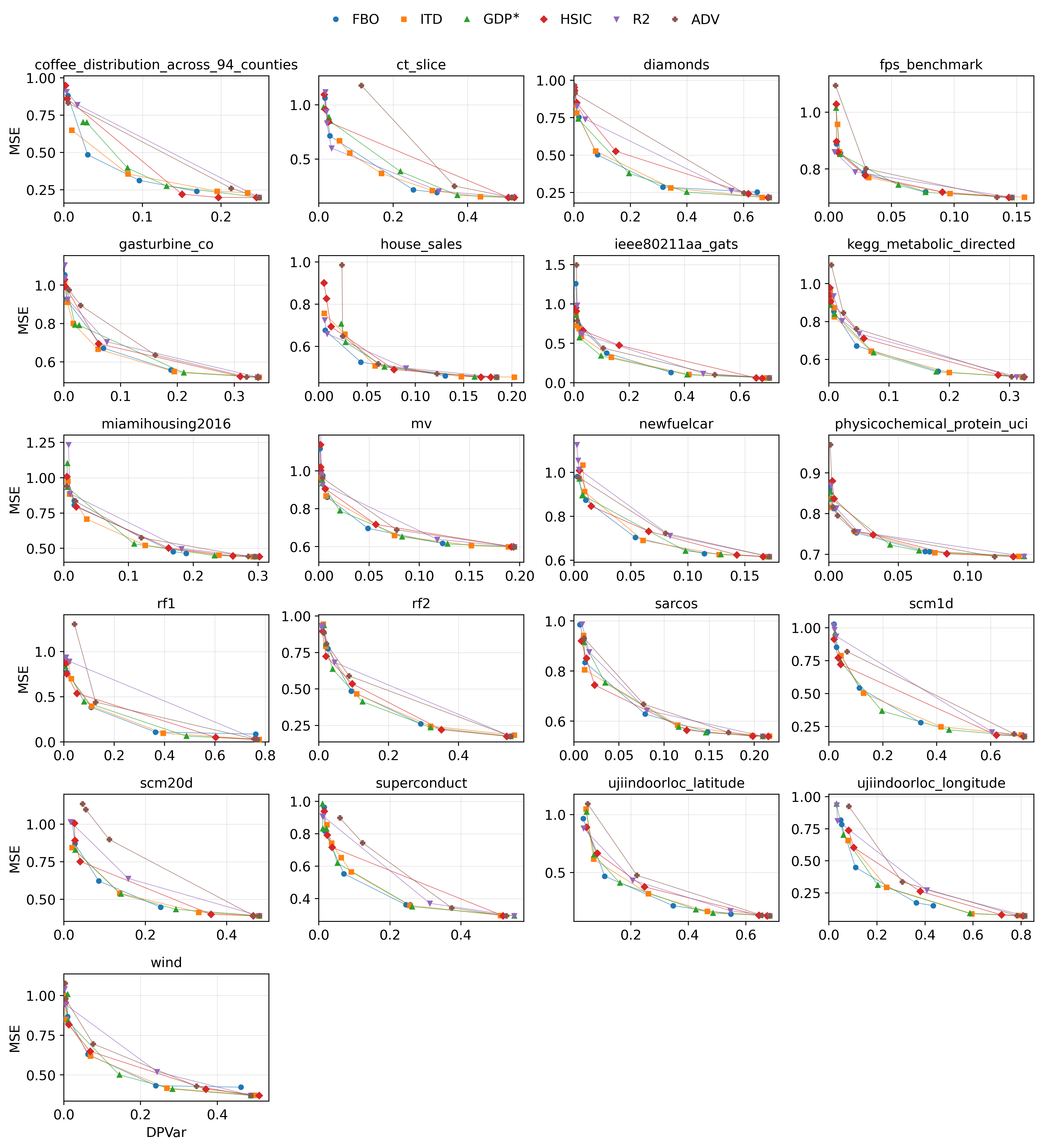}
  \caption{\textbf{Additional Pareto fronts on tabular regression datasets with semisynthetic continuous sensitive attributes.} Each panel shows the test trade-off between prediction error (MSE; lower is better) and demographic-parity variance (DPVar; lower is better) for seed 0. Each curve corresponds to one method.}
  \label{fig:tabular_pareto_appx}
\end{figure*}

\begin{figure*}[h!]
\centering
\includegraphics[width=0.32\linewidth]{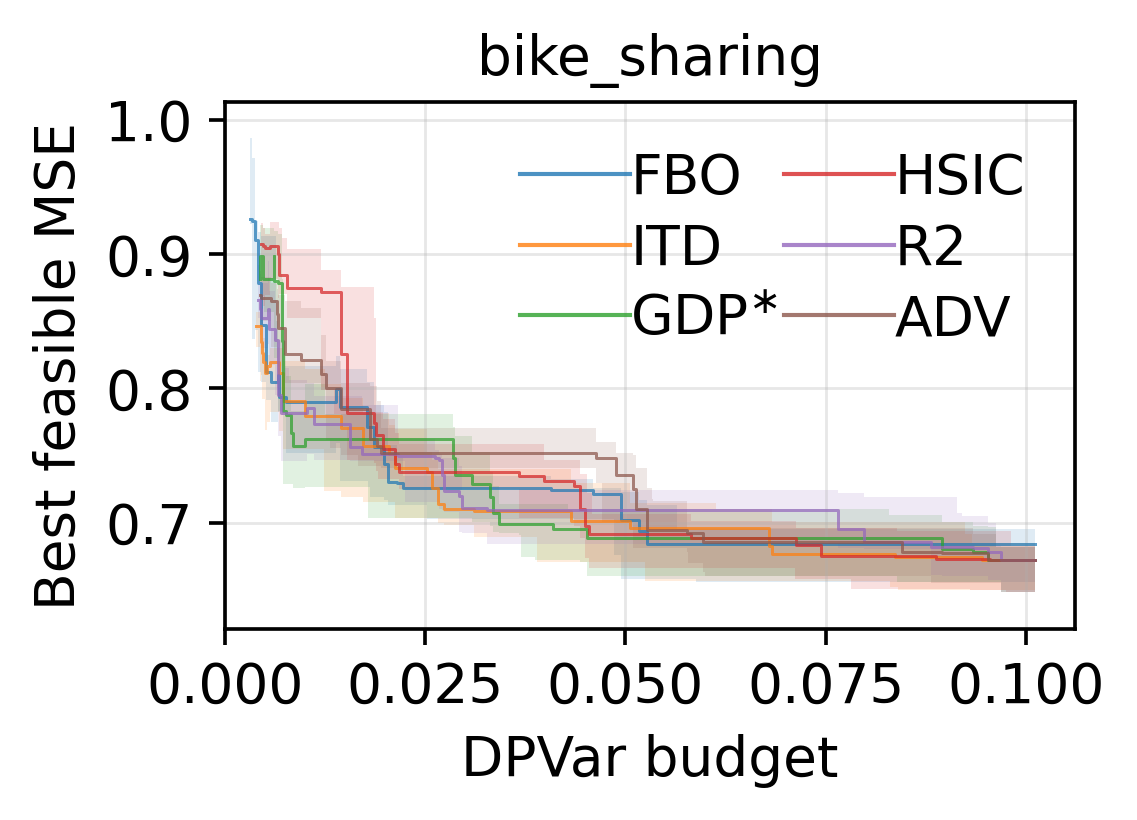}
\includegraphics[width=0.32\linewidth]{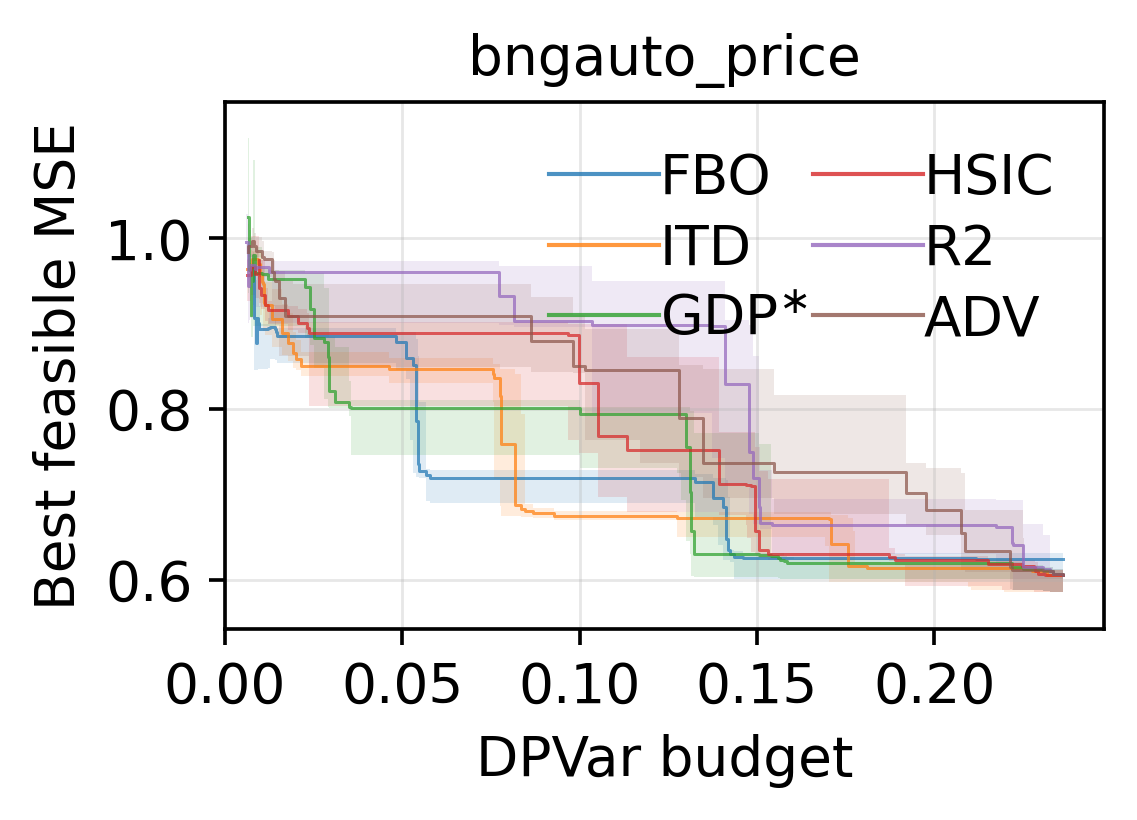}
\includegraphics[width=0.32\linewidth]{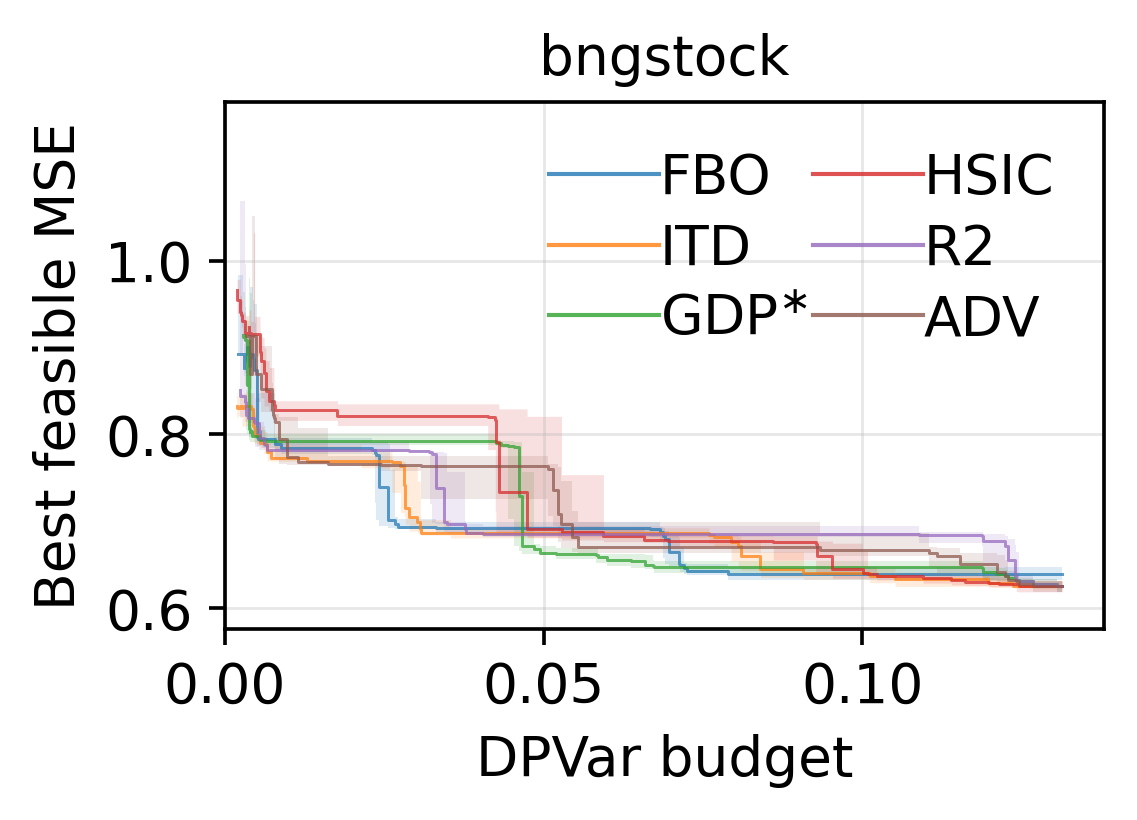}
\includegraphics[width=0.32\linewidth]{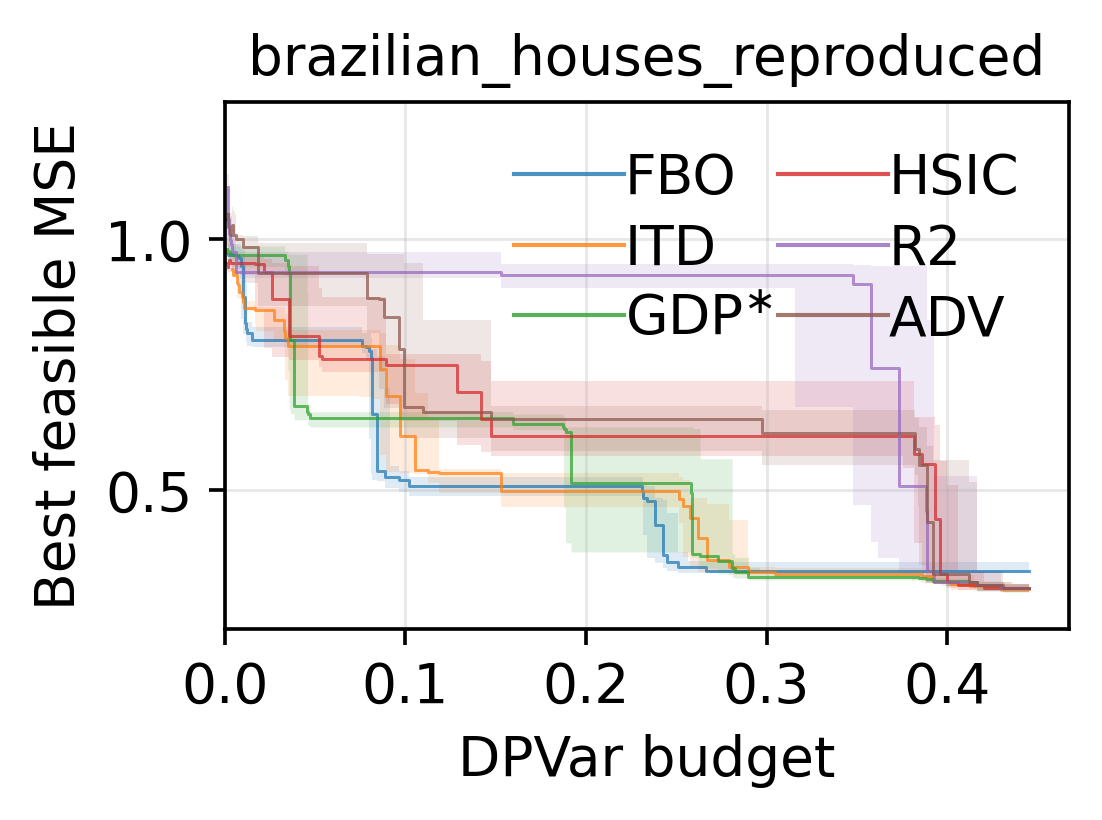}
\includegraphics[width=0.32\linewidth]{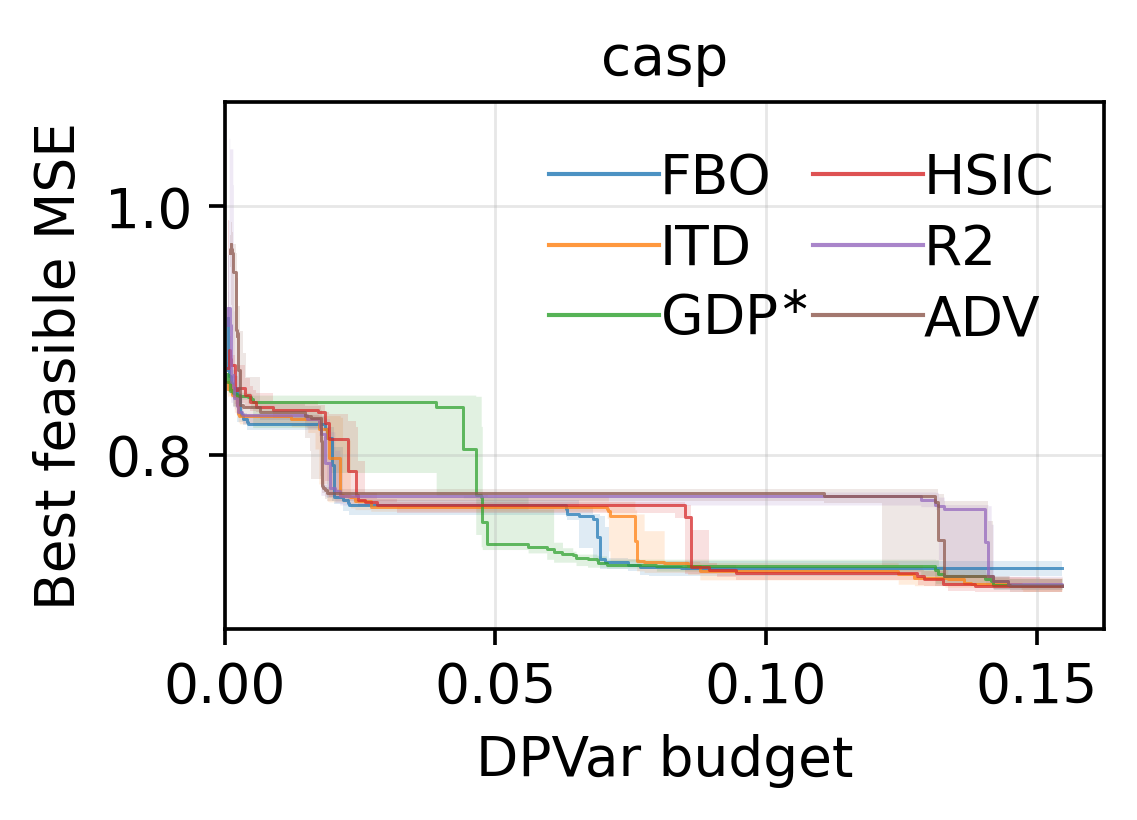}
\caption{\textbf{Seed-wise uncertainty of the fairness--accuracy frontier on five datasets.}
Each panel corresponds to one dataset and summarizes 10 runs (seeds 0--9) of the final test Pareto front for each method. Lower DPVar budget means more fairness, lower MSE means more accuracy. For each allowed DPVar level $\tau$ on the horizontal axis, each run is first converted into a best-feasible value, $\min \{ \mathrm{MSE}_{\text{test}} : \mathrm{DPVar}_{\text{test}} \le \tau \}$ computed over that run's final Pareto points. For each method, the solid step curve is the median of this quantity across seeds, and the shaded band spans the 25th to 75th percentile interquartile range (central half of the seed results). The band is shown only where at least 5 of the 10 seeds provide a feasible point, which avoids over-interpreting the extreme low-budget regime where very few runs satisfy the constraint. This construction aligns all seeds on a common fairness-budget axis, so uncertainty can be compared even when different seeds select different raw Pareto points. Across panels, wide bands indicate stronger seed sensitivity at that fairness budget, while narrow bands indicate that the method's achievable test MSE is stable across reruns. In all cases, the most pronounced variability appears in the tight-fairness regime, whereas the curves and bands become more stable once the DPVar budget is relaxed.}
\label{fig:10seed-band-many-datasets}
\end{figure*}


\end{document}

%% file: new_intro.tex
\par Predictive machine learning models are effective because they exploit statistical signals that reduce prediction error. Unfortunately, standard training does not distinguish between signals we want the model to use and signals tied to attributes deemed ``sensitive'' in legal, ethical, or policy contexts \citep{barocas2016bigdata,mehrabi2021survey}. Models are then often required to be ``fair'' with respect to these attributes, which raises the question of which forms of dependence between prediction and sensitive attribute should be controlled. Fairness is not a single property but a \emph{family} of criteria, capturing different kinds of disparity \citep{mehrabi2021survey}: demographic parity (DP) compares outcomes across sensitive groups \citep{jiang2022gdp}, equalized-odds and equal-opportunity notions condition on the true outcome \citep{hardt2016equality}, and individual or counterfactual notions enforce consistency across similar individuals or hypothetical interventions \citep{kusner2017counterfactual,barocas2023fairness}. These criteria are generally not equivalent and often cannot be satisfied jointly \citep{chouldechova2017fair,kleinberg2017inherent}. We focus on the demographic parity family, motivated by settings where the goal is to control systematic differences in predictions across values of the sensitive attribute, and accordingly restrict our related work and empirical comparisons to DP-style methods.

\par We consider supervised learning with non-sensitive features $X\in\mathbb R^{k_X}$, label $Y\in\mathbb R$, and sensitive attribute $A\in\mathbb R^{k_A}$, with predictor $f_\omega: \mathbb R^{k_X}\to\mathbb R$. Much of the DP literature formulates fairness as full statistical independence $f_\omega(X)\perp A$ \citep{kamishima2012prejudice,gretton2008hsic,zhang2018mitigating,edwards2016censoring} and focuses on categorical attributes or predefined groupings. Yet in many applications $A$ is naturally continuous and vector-valued: age, income, demographic score vectors, or posteriors over attributes \citep{mary2019fairness,jiang2022gdp,kong2025fair}. In this regime, full independence is often too strong. It constrains the entire prediction distribution across every value of $A$, and the gap between this constraint and the actual fairness concern -- systematic shifts in \emph{average} prediction -- grows with the dimension and richness of $A$. In credit scoring, for instance, fairness may require similar average scores across values of $A$, even if the score distributions differ in spread or shape.

\par For such applications, a more appropriate requirement is that prediction be balanced \emph{on average} across values of $A$. This conditional-mean view of demographic parity asks that $a \mapsto \mathbb E[f_\omega(X)\mid A=a]$ be (approximately) constant \citep{mary2019fairness,jiang2022gdp,yazdani2024fair,pmlr-v80-komiyama18a}, and is captured by the variance of this conditional mean,
\[
\text{DPVar}(\omega)=\mathrm{Var}_A\left(\mathbb{E}\left[f_\omega(X)\mid A\right]\right).
\]

When $A$ is discrete, $\text{DPVar}=0$ recovers mean-based weak demographic parity \citep{charpentier2023mitigating,glasserman2024should}; our focus is the \emph{continuous and high-dimensional setting}, where DPVar provides an estimable scalar violation measure. Figure~\ref{fig:dpvar_conceptual} illustrates this view: DPVar measures variation of the average prediction across values of $A$, not full distributional dependence. This distinction is natural in applications such as job recommendation \citep{vladimirova2024fairjob}, where one may want comparable salary recommendations on average across demographic profiles without requiring identical recommendation distributions (see~\cref{app:job_example}).

\begin{figure}
\centering
\begin{tikzpicture}[x=0.9cm,y=0.9cm]
  \draw[ink!90, line width=0.5pt, ->] (0,0) -- (5.1,0) node[right, font=\small] {$a$};
  \draw[ink!90, line width=0.5pt, ->] (0,0) -- (0,2.6) node[above, font=\small] {$\mathbb{E}[f_\omega(X)\mid A=a]$};

    \draw[accent2, line width=1.3pt, smooth] plot coordinates {(0.25,0.85) (0.9,1.25) (1.7,1.65) (2.5,1.10) (3.3,1.70) (4.2,0.95) (4.8,1.20)};
    \node[anchor=west, text=accent2, font=\small] at (2.15,1.92) {higher DPVar / unfair};

    \draw[accent, line width=1.3pt, smooth] plot coordinates {(0.25,1.13) (0.9,1.3) (1.7,1.29) (2.5,1.10) (2.8,1.13) (3.3,1.29) (4.2,1.15) (4.8,1.21)};
    \node[anchor=west, text=accent, font=\small] at (2.15,0.74) {lower DPVar / more fair};
\end{tikzpicture}
\caption{Conceptual illustration of DPVar. A predictor with higher DPVar has a conditional-mean
profile $a\mapsto \mathbb E[f_\omega(X)\mid A=a]$ that varies more strongly with the sensitive
attribute. Lower DPVar corresponds to a flatter conditional-mean profile.}
\label{fig:dpvar_conceptual}
\end{figure}

\par Optimizing DPVar is not straightforward: evaluating it requires the conditional-mean function $h^\star_\omega(a)=\mathbb E[f_\omega(X)\mid A=a]$, which is unknown for a fixed predictor and must itself be estimated from data. Treating this estimation as an inner optimization problem and the accuracy-fairness trade-off as an outer one yields a bilevel problem; because the inner variable is a function rather than a parameter vector, it is \emph{functional bilevel}, leading to a non-trivial optimization problem~\cite{petrulionyte2024functional}.

\paragraph{Contributions.}
\begin{enumerate}[leftmargin=*, itemsep=1pt, topsep=0pt, parsep=0pt]
    \item We adopt $\mathrm{DPVar}(\omega)=\mathrm{Var}_A(\mathbb E[f_\omega(X)\mid A])$ as a fairness criterion for continuous and high-dimensional sensitive attributes and show that optimizing it yields a functional bilevel problem that integrates naturally with neural-network predictors -- a regime that prior conditional-mean approaches~\citep{pmlr-v80-komiyama18a,jiang2022gdp} do not cover.
    \item We derive a closed-form adjoint for the squared-loss case and propose two algorithms for this problem: \textbf{FBO}, which uses the closed-form adjoint to obtain an exact, Hessian-free hypergradient, and \textbf{ITD}, which differentiates through unrolled inner steps. 
    \item We introduce a semi-synthetic large-scale benchmark built from 60 real tabular regression datasets, providing a standardized evaluation suite for continuous-attribute fairness methods under a unified DPVar diagnostic.
    \item Across synthetic and tabular experiments, both FBO and ITD achieve the lowest or near-lowest accuracy regret under DPVar constraints, against HSIC, adversarial debiasing, linear-dependence penalties, and generalized-DP objectives.
\end{enumerate}

The remainder of the paper proceeds as follows. Section~\ref{sec:related_work} reviews related work. Section~\ref{sec:method} introduces the functional bilevel problem and the two algorithms. Section~\ref{sec:experiments} reports experiments. 

%% file: new_related.tex
\par Most demographic-parity (DP) methods formulate fairness as independence between $f_\omega(X)$ and $A$, or approximate this independence through dependence penalties, adversarial debiasing, or group-based constraints. DPVar instead targets the weaker conditional-mean requirement. For binary outputs, the two notions coincide; for real-valued predictions, DPVar is a first-order relaxation of DP.

\par Figure~\ref{fig:dp-dpvar-literature-map} provides a schematic map of how DPVar relates to existing demographic-parity methods, optimal-transport approaches to fair regression, continuous-sensitive-attribute methods, and bilevel fairness formulations. Direct DP methods enforce fairness through constraints or reductions \citep{zafar2017fairness,agarwal2018reductions,dwork2012fairness}, while approximate methods replace the independence constraint with a dependence penalty optimized jointly with the predictive loss: kernel-based dependence measures such as HSIC \citep{perezsuay2017fair,li2022kernel,gretton2008hsic}, mutual-information-inspired objectives \citep{kamishima2012prejudice}, and adversarial debiasing \citep{edwards2016censoring,zhang2018mitigating}. A separate line of work characterizes the optimal fair regressor through optimal transport, projecting an unconstrained regressor onto the Wasserstein barycenter of group-conditional output distributions \citep{chzhen2020wasserstein,legouic2020projection,chzhen2022minimax,gaucher2023wasserstein}. These methods provide strong theoretical guarantees but operate as post-processing of a pre-trained regressor, target full distributional matching rather than a moment-based relaxation, and are largely limited to discrete sensitive groups; they do not natively integrate fairness into the training of modern over-parameterized neural networks, and do not directly extend to high-dimensional continuous $A$. 


Several methods address continuous $A$ through HGR-based criteria, learned or constructed groups, integral probability metrics, the coefficient-of-determination (CoD) \citep{pmlr-v80-komiyama18a}, or Generalized Demographic Parity (GDP) \citep{mary2019fairness,grari2020fairness,shilova2025fairness,kong2025fair,jiang2022gdp}; among these, CoD and GDP are closest in spirit to DPVar in adopting a conditional-mean viewpoint, but the former is restricted to linear regression and the latter to one- or two-dimensional $A$, neither covering neural-network predictors with high-dimensional sensitive attributes. Finally, bilevel fairness methods use bilevel formulations for reweighting, training mechanisms, architectures, or Pareto selection \citep{roh2021fairbatch,ozdayi2021bifair,yazdani2024fair,tanji2026fairness}; in these methods, the bilevel structure is a training-procedure or architectural choice rather than a fairness criterion. Our contribution is different: we directly optimize DPVar through a functional bilevel formulation that integrates with neural-network predictors and applies natively to continuous, high-dimensional sensitive attributes.

\par A detailed comparison with demographic-parity, dependence-penalty, adversarial debiasing, and bilevel fairness methods is given in Appendix~\ref{app:related_work}.

\input{new_figure_related.tex}

%% file: new_figure_related.tex
\begin{figure*}[ht]
\centering
\begin{tikzpicture}[
  font=\footnotesize,
  >=Stealth,
  box/.style={
    draw, rounded corners, align=center,
    inner sep=4pt, line width=0.45pt, fill=white
  },
  paper/.style={
    box, text width=32mm
  },
  core/.style={
    box, text width=37mm
  },
  arr/.style={-Stealth, line width=0.45pt},
  darr/.style={-Stealth, dashed, line width=0.45pt},
  lab/.style={
    midway, fill=white, inner sep=1pt, font=\scriptsize, align=center
  }
]

\node[core] (dp) {\textbf{Demographic Parity (DP)}\\$f(X)\perp A$};

\node[core, below=10mm of dp] (dpv)
{\textbf{DPVar}\\
$\text{Var}_A\left(\mathbb{E}\left[f(X)\mid A\right]\right)$\\
conditional-mean / weak-DP view};

\node[paper, left=14mm of dp, yshift=0mm] (directdp)
{\textbf{Direct DP methods}\\ constraints, reductions\\ \citep{zafar2017fairness,agarwal2018reductions,dwork2012fairness}};

\node[paper, below=2mm of directdp] (proxydp)
{\textbf{Approximate DP methods}\\ HSIC, MI, adversarial\\ \citep{perezsuay2017fair,li2022kernel,gretton2008hsic,kamishima2012prejudice,edwards2016censoring,zhang2018mitigating}};

\node[paper, right=14mm of dp, yshift=0mm] (ot)
{\textbf{OT / Wasserstein}\\post-processing,\\ full distribution view\\ \citep{chzhen2020wasserstein,legouic2020projection,chzhen2022minimax,gaucher2023wasserstein}};

\node[paper, below=2mm of ot] (conta)
{\textbf{Continuous-$A$ methods}\\ HGR, grouping, IPM\\ \citep{mary2019fairness,grari2020fairness,shilova2025fairness,kong2025fair}};

\node[paper, below=2mm of conta] (gdp)
{\textbf{Conditional-mean DP summaries}\\GDP \citep{jiang2022gdp}, CoD~\citep{pmlr-v80-komiyama18a},\\ absolute-deviation view};

\node[paper, below=4mm of dpv, text width=39mm, fill=gray!18] (thiswork)
{\textbf{This work: FBO / ITD}\\ bilevel optimization of DPVar};

\node[paper, below=2mm of proxydp] (fairbinn)
{\textbf{Bilevel fairness methods}\\
training mechanisms,\\ architectures
\citep{roh2021fairbatch,ozdayi2021bifair,yazdani2024fair,tanji2026fairness}};

\definecolor{axisblue}{RGB}{16,89,166}      

\draw[arr, draw=axisblue] (dp) to[bend left=12]
  node[lab, right, xshift=0mm, text=axisblue] {\tiny DP $\Rightarrow$ DPVar=0} (dpv);

\draw[arr, draw=axisblue] (dpv) to[bend left=12]
  node[lab, left, xshift=-0mm, text=axisblue] {\tiny DPVar=0 $\Rightarrow$ DP\\for binary-valued\\predictions} (dp);

\draw[arr] (directdp.east) -- (dp.west);

\draw[darr] (proxydp.east) -- (dp.west);

\draw[darr] (ot.west) -- (dp.east);

\draw[darr] (conta.west) -- node[lab, above, yshift=+1.4mm] {\scriptsize\shortstack{extensions\\/ penalties}} (dp.east);

\draw[darr] (gdp.west) -- node[lab, above, yshift=+1mm] {\scriptsize\shortstack{shared\\cond.~mean\\view}} (dpv.east);

\draw[arr] (thiswork.north) -- (dpv.south);

\draw[darr] (fairbinn.east) -- node[lab, above, yshift=+1mm] {\scriptsize\shortstack{related\\ bilevel\\view}} (thiswork.west);

\draw[darr] (fairbinn.north) to[out=0,in=250] (dp.south west);

\end{tikzpicture}
\caption{Positioning of DPVar relative to demographic parity (DP) and existing fairness methods. The blue arrows show the relation between full distributional DP and conditional-mean DP. Solid method arrows indicate direct optimization of the target object; dashed arrows indicate related relaxations, penalties, or approximate formulations.}
\label{fig:dp-dpvar-literature-map}
\end{figure*}

%% file: new_method.tex
We now formalize the DPVar objective and its functional bilevel structure under the notation introduced in the introduction, and derive two algorithms that solve the resulting problem.

\paragraph{DPVar: Conditional-Mean Demographic Parity.}
For a fixed predictor $f_\omega$, define the conditional mean prediction function
\begin{equation}
  h^\star_\omega(a) = \mathbb{E}_{(X,A)\sim P_{\mathrm{in}}}\left[f_\omega(X)\mid A=a\right].
  \label{eq:hstar_def}
\end{equation}
Taking $\mathcal H = L^2(P_{A,\mathrm{in}})$ at the population level, $h^\star_\omega$ is equivalently the unique $L^2(P_{A,\mathrm{in}})$ solution of
\begin{equation}
  h^\star_\omega \in \arg\min_{h\in\mathcal H}
  \mathbb{E}_{(X,A)\sim P_{\mathrm{in}}}
  \left[\left(f_\omega(X)-h(A)\right)^2\right].
  \label{eq:inner_sq}
\end{equation}
In the algorithms, this population conditional mean is approximated by a parametric regressor $h_\phi$, implemented as an MLP --- this parametric inner regressor is what allows the formulation to scale to high-dimensional $A$ and to integrate end-to-end with neural-network predictors. The outer objective trades off prediction accuracy and DPVar:
\begin{equation}
  F(\omega)
  =
  \mathbb E_{(X,Y)\sim P_{\mathrm{out}}}\left[\left(f_\omega(X)-Y\right)^2\right]
  +
  \alpha\,\operatorname{Var}_A\left(h^\star_\omega(A)\right),
  \label{eq:outer_dpvar}
\end{equation}
with $\alpha\ge 0$ controlling the accuracy-fairness trade-off. Because the outer objective depends on $\omega$ through the solution $h^\star_\omega$ of an inner optimization problem, this is a bilevel problem; because the inner variable $h$ is a function rather than a parameter vector, it is \emph{functional bilevel}.

\paragraph{Two population laws.}
We use two population laws, $P_{\mathrm{in}}$ and $P_{\mathrm{out}}$, to define the inner and outer objectives respectively. This separation prevents the same data from being used to fit $h^\star_\omega$ and to evaluate the fairness penalty $\mathrm{Var}_A(h^\star_\omega(A))$, which would otherwise yield an optimistically-biased estimate of DPVar. Throughout the derivation we assume that the marginals on $A$ coincide, $P_{A,\mathrm{in}} = P_{A,\mathrm{out}}$. In finite samples, \texttt{IN} and \texttt{OUT} are random splits of the same training distribution, so this equality is the population idealization of the splitting scheme.

\paragraph{A closed-form, Hessian-free hypergradient.}
A bilevel problem in this form is generally solved by implicit differentiation, which requires inverting the Hessian of the inner objective with respect to $h$ --- in finite-dimensional bilevel optimization this typically means Hessian-vector products at every outer step. In our setting the inner problem is a squared-loss regression, and its Hessian with respect to $h$ reduces to a constant scaling on $L^2(P_{A,\mathrm{in}})$. This collapse is what enables the next result: the adjoint admits an explicit closed form, and the resulting hypergradient is Hessian-free.

\begin{proposition}[Closed-form hypergradient for squared-loss DPVar] \label{prop:closed_form_hypergrad}
Define $h^\star_\omega$ and $F$ as in equations~\eqref{eq:inner_sq} and~\eqref{eq:outer_dpvar}. Assume that the regularity conditions ensuring differentiability of $\omega \mapsto h^\star_\omega$ hold, and that $P_{A,\mathrm{in}} = P_{A,\mathrm{out}}$. Then $F$ is differentiable and
\[
\nabla_\omega F(\omega)
=
2\,\mathbb{E}_{\mathrm{out}}\!\left[\bigl(f_\omega(X)-Y\bigr)\,\partial_\omega f_\omega(X)\right]
+
2\alpha\,\mathbb{E}_{\mathrm{in}}\!\left[\bigl(h^\star_\omega(A)-\mu_\omega\bigr)\,\partial_\omega f_\omega(X)\right],
\]
where $\mu_\omega = \mathbb{E}_{A\sim P_{A,\mathrm{out}}}[h^\star_\omega(A)]$. Equivalently, the adjoint function is
\[
a^\star_\omega(a) = -\alpha\bigl(h^\star_\omega(a)-\mu_\omega\bigr).
\]
\end{proposition}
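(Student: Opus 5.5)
The plan is to use the standard functional implicit-differentiation (adjoint) framework for bilevel problems whose inner variable lives in a Hilbert space. Here that space is $\mathcal H = L^2(P_{A,\mathrm{in}})$ with inner product $\langle u,v\rangle = \mathbb E_{A\sim P_{A,\mathrm{in}}}[u(A)v(A)]$. I would first rewrite the outer objective as $L_{\mathrm{out}}(\omega,h) = \mathbb E_{\mathrm{out}}[(f_\omega(X)-Y)^2] + \alpha\,\mathbb E_{A\sim P_{A,\mathrm{out}}}[(h(A)-\mu(h))^2]$, where $\mu(h)=\mathbb E_{P_{A,\mathrm{out}}}[h(A)]$, so that $F(\omega)=L_{\mathrm{out}}(\omega,h^\star_\omega)$. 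The inner objective is $L_{\mathrm{in}}(\omega,h)=\mathbb E_{\mathrm{in}}[(f_\omega(X)-h(A))^2]$. The assumption $P_{A,\mathrm{in}}=P_{A,\mathrm{out}}$ ensures that both objectives are defined on the same space $\mathcal H$ and that gradients computed with respect to one inner product can be read in the other.

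The key steps, in order, are as follows.
\begin{enumerate}
\item Compute the Fréchet derivative of the variance penalty in a direction $\varphi\in\mathcal H$. Expanding $h+\varepsilon\varphi$ together with the induced shift $\mu+\varepsilon\mathbb E[\varphi]$ gives first-order term $2\mathbb E[(h-\mu)(\varphi-\mathbb E\varphi)]$. This equals $2\mathbb E[(h-\mu)\varphi]$ because $h-\mu$ has zero mean. Hence $\partial_h L_{\mathrm{out}} = 2\alpha(h-\mu)$ as an element of $\mathcal H$.
\item Note that $\partial_h L_{\mathrm{in}}(\omega,h)[\varphi] = -2\mathbb E_{\mathrm{in}}[(f_\omega(X)-h(A))\varphi(A)]$. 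Conditioning on $A$, this equals $2\langle h-h^\star_\omega,\varphi\rangle$. Consequently $\partial^2_{hh}L_{\mathrm{in}} = 2I$, which is uniformly invertible, and $h^\star_\omega$ is indeed the conditional mean.
\item Solve the adjoint equation $\partial^2_{hh}L_{\mathrm{in}}\,a^\star_\omega = -\partial_h L_{\mathrm{out}}(\omega,h^\star_\omega)$ explicitly. This gives $a^\star_\omega = -\alpha(h^\star_\omega-\mu_\omega)$.
\item Invoke the functional implicit function theorem / adjoint identity $\nabla F(\omega)=\partial_\omega L_{\mathrm{out}} + \partial^2_{\omega h}L_{\mathrm{in}}[a^\star_\omega]$. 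For the functional bilevel setting this is available from \citep{petrulionyte2024functional} under the assumed regularity.
\item Compute the two terms. The direct term is $2\mathbb E_{\mathrm{out}}[(f_\omega-Y)\partial_\omega f_\omega]$; the penalty part contributes nothing since it does not depend on $\omega$ explicitly. The cross term is $\partial^2_{\omega h}L_{\mathrm{in}}[a] = -2\mathbb E_{\mathrm{in}}[a(A)\partial_\omega f_\omega(X)]$. Substituting $a^\star_\omega$ yields the stated formula.
\end{enumerate}

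As a sanity check, one can bypass the adjoint entirely. Differentiating $h^\star_\omega=\mathbb E[f_\omega(X)\mid A]$ gives $\partial_\omega h^\star_\omega=\mathbb E[\partial_\omega f_\omega(X)\mid A]$. The chain rule then yields $2\alpha\mathbb E[(h^\star_\omega-\mu_\omega)\mathbb E[\partial_\omega f_\omega\mid A]]$, and the tower property (again with equal $A$-marginals) gives the same expression.

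The main obstacle is justifying step 4: we need the differentiability of $\omega\mapsto h^\star_\omega$ in $\mathcal H$ and the validity of the adjoint identity in infinite dimensions. Differentiating under the expectation in $\partial_\omega L_{\mathrm{in}}$ requires dominated-convergence-type conditions on $\partial_\omega f_\omega$. Since the statement explicitly assumes these regularity conditions, I would cite them rather than reprove them. The strong convexity constant $2$ makes the invertibility part of the argument trivial.
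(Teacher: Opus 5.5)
Your proposal is correct and follows the paper's proof step by step: the first-order expansion of the centered variance penalty with the zero-mean cancellation, the Hessian $2I$ on $L^2(P_{A,\mathrm{in}})$, the explicit adjoint $a^\star_\omega=-\alpha(h^\star_\omega-\mu_\omega)$, and the adjoint identity of \citep{petrulionyte2024functional} evaluated term by term. Your extra sanity check, using $\partial_\omega h^\star_\omega=\mathbb E[\partial_\omega f_\omega(X)\mid A]$ and the tower property, does not appear in the paper, but it is a valid direct confirmation of the same formula.
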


The proof, given in Appendix~\ref{app:hypergrad}, applies the functional implicit-differentiation and adjoint-sensitivity framework of \citet{petrulionyte2024functional}. The novelty here is the consequence: the squared-loss structure of the inner problem reduces the adjoint to a centered residual, and the hypergradient becomes a single expectation involving $h^\star_\omega - \mu_\omega$ and $\partial_\omega f_\omega(X)$. No Hessian-vector products, no inner Jacobians -- only one inner regression and two empirical means per outer step. We refer to the resulting algorithm as {FBO} (Functional Bilevel Optimization).

\paragraph{Iterative differentiation as a second algorithm.}
A separate, broadly applicable strategy for hypergradient computation differentiates through a truncated sequence of inner updates (see~\cite{pedregosa2016hyper} and references therein): the inner regressor $h_\phi$ is unrolled for $K$ gradient steps from a current iterate, the outer objective is evaluated at the unrolled regressor, and the gradient with respect to $\omega$ is taken through the entire unrolled computation graph. We refer to this as {ITD} (Iterative Differentiation). Unlike the FBO derivation, which exploits the closed-form adjoint specific to squared loss, ITD does not require the inner Hessian to admit a closed-form inverse and extends to settings where the inner problem is not squared-loss -- for example, conditional-mean estimation under Huber loss, or alternative inner objectives that capture different fairness criteria. Within this paper, FBO and ITD are two complementary instantiations of the same functional bilevel framework: FBO is exact and Hessian-free at the cost of being tied to the squared-loss inner objective; ITD is approximate and more expensive per outer step (its cost scales linearly in  $K$) but applies more broadly.

\paragraph{Algorithms.}
Both methods alternate two operations: first, approximately fitting $h^\star_\omega$ on the \texttt{IN} split by regressing the current predictions onto $A$ (Algorithm~\ref{alg:inner-fit}); second, updating the predictor parameters $\omega$ using a hypergradient computed from the \texttt{OUT} objective. The two algorithms differ only in this second step. FBO uses the closed-form empirical hypergradient from Proposition~\ref{prop:closed_form_hypergrad}; ITD differentiates through $K$ unrolled inner updates. Both are summarized in Algorithm~\ref{alg:hypergradient}, and the overall training loop is given in Algorithm~\ref{alg:bilevel-train}.

\begin{algorithm}[!ht]
\caption{EstimateConditionalMean}
\label{alg:inner-fit}
\begin{algorithmic}[1]
\State \textbf{Input:} predictor parameters $\omega$; initial inner parameters $\phi_0$; inner stepsize $\eta_{\mathrm{in}}$; number of inner steps $M$; \texttt{IN} split
\State $\phi \leftarrow \phi_0$
\For{$m=1,\dots,M$}
    \State Compute the empirical inner objective
    \[
    \hat L_{\mathrm{in}}(\omega,\phi)
    =
    \frac{1}{|\texttt{IN}|}
    \sum_{(x_i,a_i)\in \texttt{IN}}
    \big(f_\omega(x_i)-h_\phi(a_i)\big)^2
    \]
    \State Update
    \[
    \phi \leftarrow \phi - \eta_{\mathrm{in}} \nabla_\phi \hat L_{\mathrm{in}}(\omega,\phi)
    \]
\EndFor
\State \textbf{Return:} $\phi$
\end{algorithmic}
\end{algorithm}

\begin{algorithm}[!ht]
\caption{ComputeHypergradient}
\label{alg:hypergradient}
\begin{algorithmic}[1]
\State \textbf{Input:} predictor parameters $\omega$; inner parameters $\phi$; fairness weight $\alpha$; hypergradient mode \textsc{HG} $\in \{\textsc{FBO},\textsc{ITD}\}$; unrolling length $K$; inner stepsize $\eta_{\mathrm{in}}$; \texttt{IN} split; \texttt{OUT} split
\If{\textsc{HG} = \textsc{FBO}}
    \State Compute
    \[
    \hat\mu \leftarrow \frac{1}{|\texttt{OUT}|}\sum_{a_i\in\texttt{OUT}} h_\phi(a_i)
    \]
    \State Compute the accuracy gradient
    \[
    g_{\mathrm{acc}}
    \leftarrow
    \nabla_\omega
    \left[
    \frac{1}{|\texttt{OUT}|}
    \sum_{(x_i,y_i)\in\texttt{OUT}}
    \big(f_\omega(x_i)-y_i\big)^2
    \right]
    \]
    \State Compute the fairness gradient
    \[
    g_{\mathrm{fair}}
    \leftarrow
    \nabla_\omega
    \left[
    \frac{2\alpha}{|\texttt{IN}|}
    \sum_{(x_i,a_i)\in\texttt{IN}}
    \big(h_\phi(a_i)-\hat\mu\big) f_\omega(x_i)
    \right]
    \]
    \State \textbf{Return:} $g_{\mathrm{acc}} + g_{\mathrm{fair}}$
\Else
    \State $\tilde\phi^{(0)} \leftarrow \phi$
    \For{$k=0,\dots,K-1$}
        \State $\tilde\phi^{(k+1)} \leftarrow \tilde\phi^{(k)} - \eta_{\mathrm{in}}
        \nabla_\phi \hat L_{\mathrm{in}}(\omega,\tilde\phi^{(k)})$
    \EndFor
    \State Define the unrolled empirical outer objective
    \[
    \hat L_{\mathrm{out}}^{\mathrm{ITD}}(\omega)
    =
    \frac{1}{|\texttt{OUT}|}
    \sum_{(x_i,y_i)\in\texttt{OUT}}
    \big(f_\omega(x_i)-y_i\big)^2
    +
    \alpha\,
    \widehat{\mathrm{Var}}_{a_i\in\texttt{OUT}}
    \big(h_{\tilde\phi^{(K)}}(a_i)\big)
    \]
    \State \textbf{Return:} $\nabla_\omega \hat L_{\mathrm{out}}^{\mathrm{ITD}}(\omega)$
\EndIf
\end{algorithmic}
\end{algorithm}

\begin{algorithm}[!ht]
\caption{Bilevel Training for DPVar}
\label{alg:bilevel-train}
\begin{algorithmic}[1]
\State \textbf{Input:} initial predictor parameters $\omega_0$; initial inner parameters $\phi_0$; fairness weight $\alpha$; outer stepsize $\eta_{\mathrm{out}}$; inner stepsize $\eta_{\mathrm{in}}$; number of outer steps $N$; number of inner-fit steps $M$; hypergradient mode \textsc{HG} $\in \{\textsc{FBO},\textsc{ITD}\}$; unrolling length $K$
\State $\omega \leftarrow \omega_0$, $\phi \leftarrow \phi_0$
\For{$n=0,\dots,N-1$}
    \If{$\alpha > 0$}
        \State $\phi \leftarrow \textsc{EstimateConditionalMean}(\omega,\phi,\eta_{\mathrm{in}},M,\texttt{IN})$
    \EndIf
    \State $g \leftarrow \textsc{ComputeHypergradient}(\omega,\phi,\alpha,\textsc{HG},K,\eta_{\mathrm{in}},\texttt{IN},\texttt{OUT})$
    \State $\omega \leftarrow \omega - \eta_{\mathrm{out}} g$
\EndFor
\State \textbf{Return:} $\omega$
\end{algorithmic}
\end{algorithm}

%% file: new_experiments.tex
We evaluate methods on the trade-off between (i) predictive mean-squared error $\mathbb{E}[(f_\omega(X)-Y)^2]$ and (ii) the demographic-parity variance $\text{DPVar}$. Unless stated otherwise, predictors and diagnostic regressors are two-hidden-layer $\tanh$ MLPs (width $64$) trained with full-batch gradient descent.

\vspace*{-0.1cm}
\paragraph{Training protocol and evaluation.}
All methods are trained using the same data partitioning and model-selection protocol. Training algorithms use only the \texttt{IN} and \texttt{OUT} splits; hyperparameters, including the fairness weight and method-specific optimization parameters, are selected on a separate \texttt{VAL} split; and final results are reported on \texttt{TEST}. To evaluate demographic-parity style deviations for continuous $A$ in a way that is comparable across methods, we use the same diagnostic pipeline throughout: for every trained predictor $f_\omega$, we compute DPVar by cross-fitting $h_\phi(A)\approx \mathbb{E}[f_\omega(X)\mid A]$ within the evaluation split (\texttt{VAL} for hyperparameter tuning, \texttt{TEST} for final reporting), fitting on one half and evaluating on the other and averaging across the two roles. We use $\alpha$ for the population-level fairness weight in the derivation; in experiments, we distinguish the training penalty weight $\DPVarTrainPenalty$, used to train a model, from the validation weight $\DPVarValPenalty$, used only for model selection along the Pareto frontier.

\vspace*{-0.1cm}
\paragraph{Baselines.}
We compare our two bilevel methods, FBO and ITD, to standard single-level alternatives trained on \texttt{IN}+\texttt{OUT} data:
(i) a correlation/$R^2$ penalty that removes linear predictability of $f_\omega(X)$ from $A$,
(ii) Generalized Demographic Parity (GDP),
(iii) adversarial debiasing, and
(iv) an HSIC penalty between predictions and $A$.
For HSIC, we use RBF kernels; the empirical penalty requires pairwise kernel evaluations, computed in blocks to avoid materializing full kernel matrices, but the cost remains  quadratic in the number of samples. For adversarial debiasing, the adversary predicts $A$ from $f_\omega(X)$, and the predictor is trained to minimize prediction loss while degrading this adversarial prediction. For GDP, we use the authors' KDE-based implementation, which is available only for one- and two-dimensional sensitive attributes; when $A$ has dimension larger than two, we apply a train-only PCA projection to two dimensions before computing the GDP penalty, and denote this adapted baseline by GDP\(^{\ast}\). The PCA projection is used only inside the GDP training penalty; all methods, including GDP\(^{\ast}\), are evaluated with the same DPVar diagnostic on the original sensitive attribute $A$.

\vspace*{-0.1cm}
\paragraph{Computational setup.}
All experiments were run on a heterogeneous GPU cluster with NVIDIA GPUs ranging from older GTX/Titan/P100-class cards to RTX A5000/A6000 and RTX 6000 Ada cards. We did not log the exact GPU model used for each individual run. A full synthetic sweep takes roughly 2 hours, while running the full method and hyperparameter-selection protocol for one tabular dataset takes roughly 20 hours.

\subsection{Synthetic regression with interaction-driven unfairness}\label{sec:synthetic}
\vspace*{-0.2cm}

We first run a controlled synthetic study with two purposes: (i) to verify that all DP-style methods recover comparable trade-offs in the easy regime where unfairness is dominated by a first-moment shift, and (ii) to check that DPVar-targeting methods preserve their advantage as the underlying prediction problem becomes harder through interactions between sensitive and non-sensitive components. The setup uses a multi-dimensional continuous sensitive attribute $A=(S,\eta)\in\mathbb{R}^{k_A}$, with a direct first-moment unfairness component (through $S$) and an interaction-driven component (through products of sensitive and non-sensitive features). The first coordinate $S\sim\mathrm{Unif}[-1,1]$ is the primary sensitive variable; $\eta\sim\mathcal N(0,0.3^2 I_{k_A-1})$ provides auxiliary nuisance-sensitive coordinates, making $A$ genuinely multi-dimensional while keeping the first sensitive direction dominant. Latent non-sensitive factors are sampled independently as $Z\sim\mathcal N(0,I_3)$. Features and labels are then constructed from separate nonlinear maps:

\[
X_A = g_A(A),\quad X_Z = g_Z(Z),\quad
X = c_A X_A + c_Z X_Z + (X_A\odot X_Z) + \varepsilon_X,
\]
\[
Y = \mathrm{base}(X) + \delta\bigl(S+\tfrac12 S^3\bigr) + \beta_{\mathrm{int}}\langle X_A, X_Z\rangle + \varepsilon_Y,
\]
where $g_A, g_Z, \mathrm{base}$ are fixed random two-layer $\tanh$ networks, $\varepsilon_X$ and $\varepsilon_Y\sim\mathcal N(0,0.1^2)$ are Gaussian noise, $S=A_1$, and $\delta$ and $\beta_{\mathrm{int}}$ control the magnitudes of the direct and interaction-driven unfairness components. Note that DPVar by design only captures first-moment disparities; the interaction component does not enter DPVar directly, but increasing $\beta_{\mathrm{int}}$ makes the underlying prediction problem harder and the conditional-mean structure of $f_\omega(X)$ richer. This gives us a one-dimensional knob to move between an easy regime ($\beta_{\mathrm{int}}=1$, where DP-style methods should all behave similarly) and a harder regime ($\beta_{\mathrm{int}}=10$, where targeting the conditional mean directly should matter more). For numerical stability, we standardize $X$, $A$, and $Y$ using \texttt{IN}/\texttt{OUT} statistics, and clip the standardized coordinates of $X$ and $A$ to limit the influence of rare extreme values.


\vspace*{-0.1cm}
\paragraph{Results.}
Figure~\ref{fig:mse_dp_tradeoff} confirms both expectations. At $\beta_{\mathrm{int}}=1$, where the unfairness signal is dominated by the first-moment term, all six methods produce essentially indistinguishable Pareto curves --- a sanity check that the methods agree in the easy case. As $\beta_{\mathrm{int}}$ grows, the curves separate: at $\beta_{\mathrm{int}}=10$, FBO and ITD recover better trade-offs than HSIC, adversarial debiasing, and the linear-dependence baseline, with GDP remaining the closest competitor. This is the expected pattern: DPVar depends only on the conditional expectation $h^\star_\omega(A)$, so enforcing stronger notions of independence (HSIC, adversarial) is unnecessarily restrictive when the prediction problem becomes more interaction-rich, and linear penalties miss the relevant nonlinear effects. The synthetic study is thus a controlled stress test in which DPVar-targeting methods preserve their advantage, but the more demanding evaluation is the tabular benchmark in Section~\ref{sec:tabular}.


\begin{figure}[ht]
  \centering
  \includegraphics[width=0.99\textwidth]{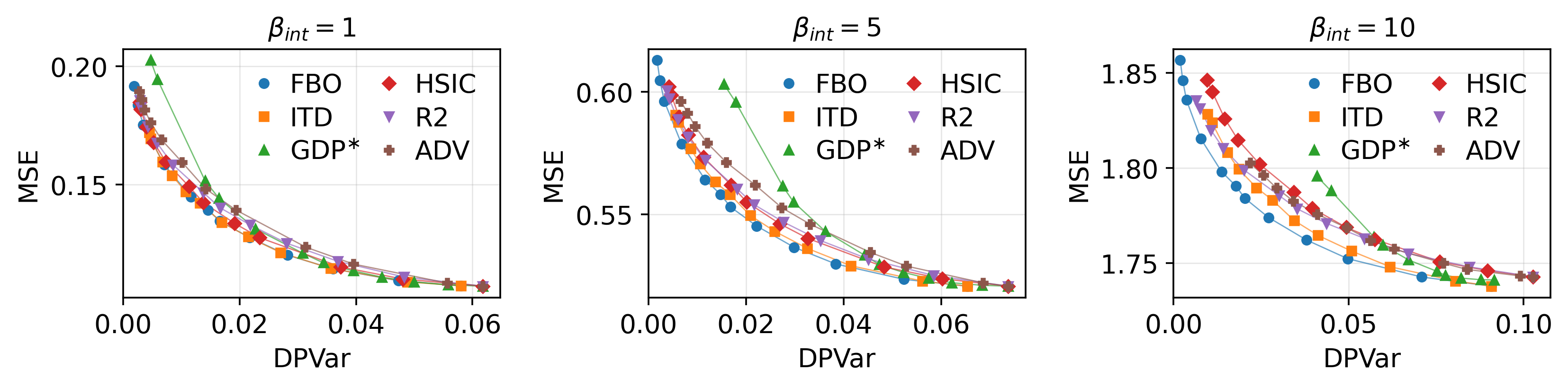}
  \caption{\textbf{Test MSE-DPVar trade-off on synthetic data.} Each point corresponds to a different DPVar penalty weight (chosen on \texttt{VAL}). Lower is better on both axes. Differences across methods become more visible as $\beta_{\mathrm{int}}$ grows.}
  \label{fig:mse_dp_tradeoff}
\end{figure}

\vspace*{-0.1cm}
\subsection{Semisynthetic benchmark on tabular datasets}\label{sec:tabular}

To complement the synthetic study with a large-scale evaluation on real data, we build a semi-synthetic tabular benchmark from a curated pool of $60$ regression datasets obtained from \emph{OpenML} and \emph{UCI} \citep{dua2019uci,vanschoren2014openml} after filtering for usable regression tasks and preprocessing them into a common numeric format. For each dataset we automatically construct a continuous sensitive attribute from the training features and remove the selected coordinates from the predictor input (details below). This yields a standardized and fully automated benchmark for fairness-accuracy trade-offs with continuous sensitive attributes, rather than a hand-picked collection. The benchmark is not meant to claim that the selected coordinates are legally or socially sensitive in the original datasets; its purpose is methodological, providing controlled continuous, multi-dimensional sensitive attributes with nontrivial proxy information in the remaining features.

We run the protocol on the full pool of 60 datasets, but for the per-dataset Pareto-style analysis we report the 27 datasets where the validation trade-offs are non-degenerate, defined as having a validation range of at least $0.1$ on both MSE and DPVar. The remaining 33 datasets are not failures: they are regimes in which all methods collapse to a similar fairness-accuracy point, so the DPVar axis carries no comparative signal. Results are reported in \cref{fig:constrained-wins-all-datasets} using the 27 informative datasets.

\vspace*{-0.1cm}
\paragraph{Sensitive attributes.}
For each dataset, we standardize the training features and rank all coordinates by their absolute correlation with the training target. We form a predictive candidate pool by keeping at most the 200 most target-informative coordinates, excluding coordinates whose absolute correlation is below $0.02$ whenever this still leaves a nonempty pool. We then construct $A$ as a fraction of this predictive pool ($25\%$ in our experiments), giving priority to coordinates that are both predictive of the target and approximately recoverable from the remaining features. After selecting these coordinates, we remove them from the predictor input. This construction ensures that $A$ carries meaningful signal about the prediction task while the remaining features still contain proxy information about $A$, so reducing DPVar is nontrivial. Full details are in Appendix~\ref{app:selectA}.

\vspace*{-0.1cm}
\paragraph{Building a Pareto front.}
For each method and dataset, we first train a coarse method-specific grid over fairness-penalty values and non-penalty hyperparameters, then adaptively refine the penalty grid while keeping the total number of trained configurations bounded. The grid is method-specific but designed to provide broad coverage of each method's natural tuning parameters (learning-rate pairs for FBO/ITD, bandwidth pairs for HSIC, ridge values for $R^2$, adversary-update settings for adversarial debiasing, KDE settings for GDP\(^{\ast}\)), spanning up to $12$ penalty values for each of $8$ method-specific hyperparameter configurations. We then use validation data to decide which trained models are retained: for each scalarization weight $\DPVarValPenalty$, we select the configuration minimizing $\mathrm{MSE}_{\mathrm{val}} + \DPVarValPenalty\, \text{DPVar}_{\mathrm{val}}$, and only these validation-selected configurations are evaluated on \texttt{TEST}. In practice several scalarization weights often select the same trained model, which is why only a small number of distinct points may appear on the final test Pareto curve visualized for one seed in Figure~\ref{fig:1seed-pareto-front}. Additional Pareto fronts for more datasets or aggregated over 10 seeds can be found in \cref{app:addRes}.

\begin{figure*}[h]
\centering
\includegraphics[width=0.9\linewidth]{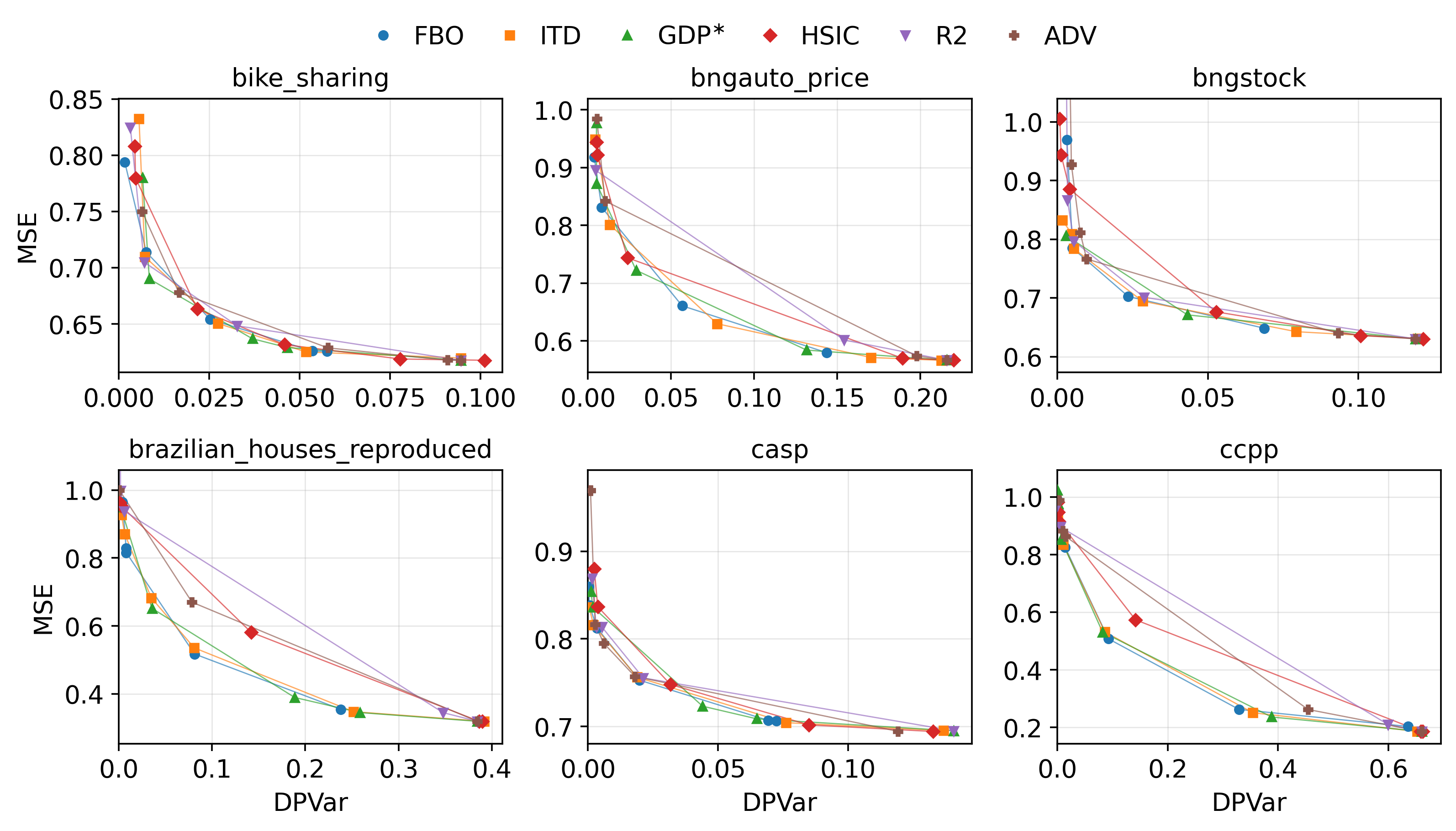}
\caption{\textbf{Seed-wise uncertainty of the fairness--accuracy frontier on six datasets.} Each panel the final test Pareto front (seed 0) for each method per dataset (first 9 alphabetically). Lower DPVar means more fairness, lower MSE means more accuracy.}
\label{fig:1seed-pareto-front}
\end{figure*}

\vspace*{-0.1cm}
\paragraph{Results.}
We use a win rate to summarize the results over many datasets and multiple seeds in \cref{fig:constrained-wins-all-datasets}. For each dataset, we define three test-time DPVar budgets from that dataset's overall Pareto range: \emph{Tight} at 25\%, \emph{Mid} at 50\%, and \emph{Loose} at 75\% of the observed test DPVar range. For a fixed budget $\tau$ and a fixed seed, a method is counted as a winner if it achieves the lowest test MSE among all methods with at least one Pareto point satisfying $\mathrm{DPVar} \le \tau$, with ties split evenly. Within each dataset, these seed-wise wins were averaged across seeds, and the dataset-level winner is then defined as the method with the highest constrained win rate, again splitting ties evenly. The main pattern is a shift along the fairness--accuracy tradeoff: FBO wins most often under tight and mid budgets, whereas GDP wins most often once the DPVar budget is loose.

\vspace*{-0.3cm}
\begin{figure}[h!]
\centering
\includegraphics[width=0.8\linewidth]{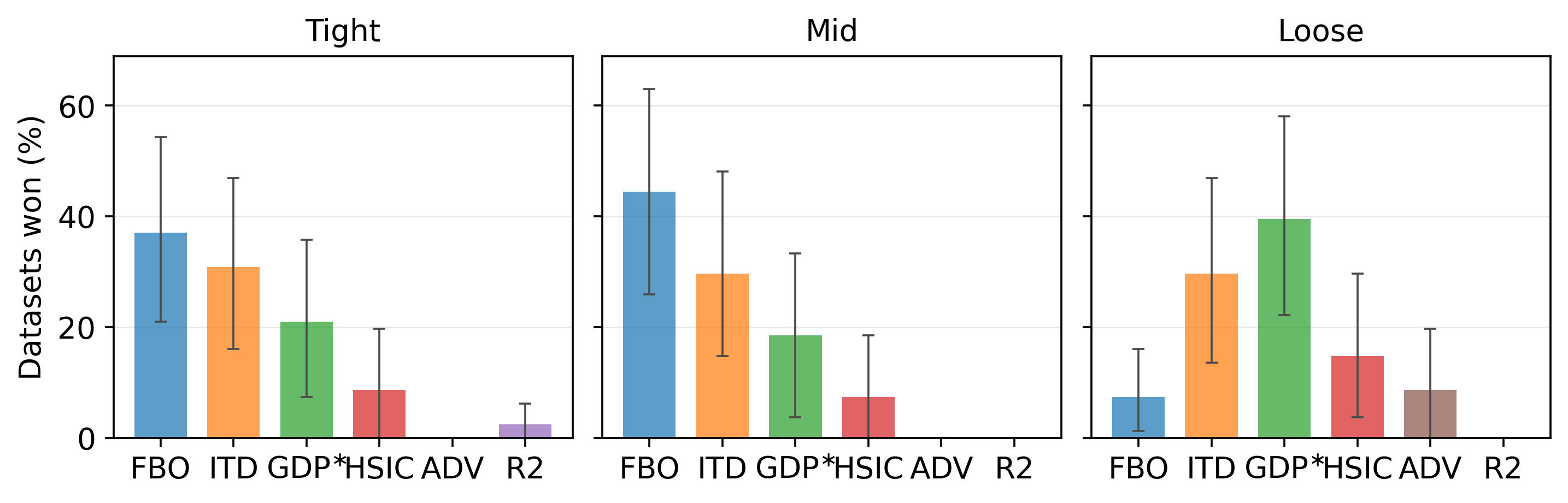}
\caption{\textbf{How often each method is the dataset-level winner under constrained fairness budgets (Tight, Mid, Loose), across 27 tabular datasets.}
Each bar shows the proportion of datasets for which a method was that dataset-level winner at the corresponding budget. Error bars show 95\% bootstrap confidence intervals obtained by resampling datasets with replacement.}
\label{fig:constrained-wins-all-datasets}
\end{figure}

%% file: new_ccl.tex
We introduced DPVar, a conditional-mean notion of demographic parity for continuous and high-dimensional sensitive attributes, and formulated its optimization as a functional bilevel problem in which the inner level estimates a conditional-mean function and the outer level trades off accuracy and fairness. In the squared-loss setting, we derived a closed-form adjoint and used it to build {FBO}, an exact Hessian-free hypergradient method; and we developed {ITD}, an unrolled iterative-differentiation algorithm. To benchmark continuous-attribute fairness methods at scale, we also introduced a semi-synthetic suite built from 60 real tabular regression datasets, evaluated under a  cross-fitted DPVar diagnostic. Across synthetic and tabular experiments, both FBO and ITD achieve the lowest or near-lowest accuracy regret under DPVar constraints across fairness levels, against HSIC, adversarial debiasing, linear-dependence, and GDP baselines.

\vspace*{-0.1cm}
\paragraph{Scope and limitations.}
DPVar is a mean-parity criterion and should not be interpreted as a surrogate for full demographic parity in applications where distributional equality is required: it can leave higher-order conditional disparities -- variance, tails, or richer distributional discrepancies -- uncontrolled. Both training and evaluation also require estimating a conditional-mean function of $A$; when $A$ is high-dimensional, this estimation problem can itself be statistically challenging. Finally, our tabular benchmark is semi-synthetic: the selected sensitive coordinates are used to create controlled tasks, not to make application-specific fairness claims about the original datasets.

\vspace*{-0.1cm}
\paragraph{Broader impact.}
This work develops methodology for controlling mean disparities in predictions across continuous, high-dimensional sensitive attributes, with potential applications in domains such as credit scoring, insurance pricing, hiring, and recommendation, where average disparities across demographic profiles are a primary fairness concern. By making such control compatible with neural-network predictors and with sensitive attributes that are themselves continuous or vector-valued, the framework can reduce reliance on coarse demographic groupings that may misrepresent affected populations. At the same time, DPVar is a mean-parity criterion: it does not guarantee distributional fairness, equality of opportunity, or individual fairness, and a low DPVar value should not be interpreted as a certificate of overall fairness, which admits context-dependent definitions.